\theoremstyle{plain}
\newtheorem{theorem}{Theorem}
\newtheorem{lemma}[theorem]{Lemma}
\newcommand \tophrule {\vspace{2mm}\rule{\columnwidth}{0.8pt}\vspace{-1mm}}
\newcommand \bothrule {\vspace{-3mm}\rule{\columnwidth}{0.4pt}\vspace{2mm}}
\newenvironment{code}
  { \begin{minipage}{\columnwidth} \tophrule \begin{alltt}\sf }
  { \end{alltt} \bothrule \end{minipage} }
\newcommand \codebf [1] {{\bfseries{}#1}}
\newcommand \codeit [1] {{\itshape{}#1}}
\begin{document}

\ifnum\statePaper=1{

\twocolumn[
\aistatstitle{Scaling Nonparametric Bayesian Inference via Subsample-Annealing}

\aistatsauthor{ Fritz Obermeyer \And Jonathan Glidden \And Eric Jonas }
\aistatsaddress{ Salesforce.com \And  Salesforce.com \And  Salesforce.com }
]

}\else{

\twocolumn[
\aistatstitle{Scaling Nonparametric Bayesian Inference via Subsample-Annealing}
\aistatsauthor{ Anonymous Author }
\aistatsaddress{ Anonymous Organization }

]

}\fi

\begin{abstract}
We describe an adaptation of the simulated annealing algorithm to nonparametric clustering and related probabilistic models.
This new algorithm learns nonparametric latent structure over a growing and constantly churning subsample of training data,
where the portion of data subsampled can be interpreted as the inverse temperature $\beta(t)$ in an annealing schedule.
Gibbs sampling at high temperature (i.e., with a very small subsample) can more quickly explore sketches of the final latent state by
(a) making longer jumps around latent space (as in block Gibbs) and
(b) lowering energy barriers (as in simulated annealing).
We prove subsample annealing speeds up mixing time $N^2\to N$ in a simple clustering model and $\exp(N)\to N$ in another class of models, where $N$ is data size.
Empirically subsample-annealing outperforms naive Gibbs sampling in accuracy-per-wallclock time,
and can scale to larger datasets and deeper hierarchical models.
We demonstrate improved inference on million-row subsamples of US Census data and network log data and a 307-row hospital rating dataset,
using a Pitman-Yor generalization of the Cross Categorization model.
\end{abstract}

\section{Introduction}

\begin{figure}[t]
\begin{tabular}{c c c}
\hspace{0.5em} {\small\sf Energy} &
\hspace{1.2em} {\small\sf Simulated} &
\hspace{1.2em} {\small\sf Subsample} \\[-1mm]
\hspace{0.5em} {\small\sf Landscape} &
\hspace{1.2em} {\small\sf Annealing} &
\hspace{1.2em} {\small\sf Annealing} \\
\hspace{-4mm}
\includegraphics[height=2.4cm]{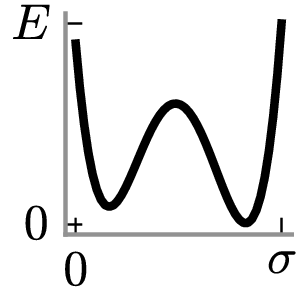}
\hspace{-3mm}
&
\hspace{-3mm}
\includegraphics[height=2.4cm]{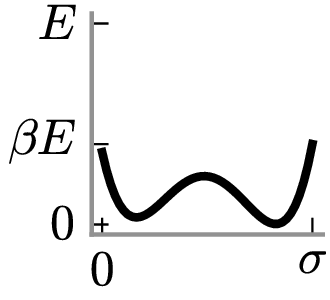}
\hspace{-3mm}
&
\hspace{-3mm}
\includegraphics[height=2.4cm]{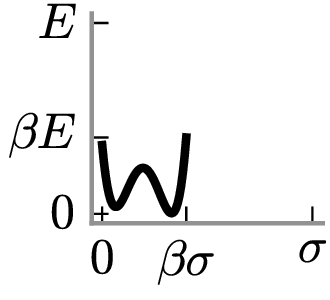}
\hspace{-3mm}
\end{tabular}
\vspace{-3mm}
\caption{
Simulated annealing vertically compresses the energy landscape, providing exponentially faster inter-mode mixing.
Subsample annealing additionally horizontally compresses the energy landscape, providing quadratically faster local mixing. ($E$ is energy scale, $\sigma$ is diffusion scale)
}
\label{fig:landscape}
\end{figure}

Recently there has been a flourishing of discrete nonparametric Bayesian models, extending clustering models such as the Dirichlet Process Mixture to more exotic nonparametric models such as the Indian Buffet Process, Cross Categorization, and infinite Hidden Markov Models\cite{ghahramani2013bayesian}.
At the same time, there has been rapid progress in scaling continuous Bayesian models to larger structured datasets, including results in stochastic gradient descent (SGD), \cite{Bottou08thetradeoffs} and a trend towards approximate inference that trades accuracy for speed.
Inference in discrete models is lagging.
Scalable variational inference methods can often be found in particular models, but these methods are more problem-specific than, say, Gibbs-sampling or SGD with minibatches.

The single-site Gibbs sampler is an easy-to-implement MCMC learning algorithm that is applicable to a wide range of discrete models.
The contribution of this paper is a time-inhomogeneous-MCMC extension of the single-site Gibbs sampler that is still easy to implement yet has been found to scale well in data size and model complexity.
Our extension runs a standard Gibbs-sampler on a subsample of the dataset, incrementally churning datapoints in and out of the subsample and progressively growing the subsample to include all datapoints.
The only extra parameter to tune is the schedule of subsample sizes.
To borrow common terminology from the simulated annealing literature, we call this new method \emph{subsample annealing}, and treat the portion of data present in the subsample at time $t$ as the inverse temperature $\beta(t)$.
Indeed deeper mathematical connections indicate that subsample annealing is approximately equivalent to simultaneously annealing on (a) the energy (as in classical simulated annealing), (b) the metric or stepsize (in a Langevin-dynamics limit), and (c) the regularization weight of hyperparameter priors.

Like simulated annealing, subsample annealing moves towards an increasingly accurate approximate solution, while guaranteeing that in the long-schedule limit, the final sample is drawn from the true posterior%
\footnote{\textit{Proof:} assuming a linear annealing schedule, the last $\left\lfloor T/N\right\rfloor$ of $T$ iterations use all $N$ datapoints.}.
In this paper we provide both theoretical evidence (in section~\ref{sec:analysis}) and empirical evidence (in section~\ref{sec:experiments}) that this early fast approximate inference results in more accurate final samples, compared to those from finite-time full-data MCMC.
This speedup is observable in units of quality-per-Gibbs-step, but even more dramatically in units of quality-per-wall-clock-time, since Gibbs steps are cheaper with less data.

We begin in section~\ref{sec:algorithm} with a description of the algorithm.
In section~\ref{sec:analysis} we examine a simple toy model and analyze asymptotic convergence rate, showing in section~\ref{sec:analysis-global} that annealing can offer exponential speedup in some cases.
In section~\ref{sec:experiments} we describe experimental results learning a more complicated Cross-Categorization model of three datasets with tens of features and up to 1 million datapoints.

\section{Inference via Subsample-Annealing}
\label{sec:algorithm}

Subsample annealing generalizes the Gibbs-sampler most commonly used in structural latent variable models.
Our approach considers the two steps of the Gibbs-sampler, that is \emph{forgetting}/\emph{removing} a datapoint and \emph{conditionally-sampling}/\emph{assigning} a datapoint, as separate operations.
Subsample annealing is simply the implementation of a generalized schedule for these two operations.

We use the Dirichlet Process Mixture Model (DPMM) as the motivating example in this section, but stress that this approach generalizes to a wide class of models in which the latent state is an assignment of datapoints to combinatorial objects.

\subsection{The DPMM Model}
\label{sec:dpmm}
The DPMM is a popular nonparametric clustering model in which we learn the posterior distribution on assignments of datapoints to clusters.
For a detailed review of this model see \cite{teh2006hierarchical}.

Let $X=[X_1,\dots,X_N]$ be a list of datapoints, and $S=\{1,\dots,N\}$ be the set of datapoint indices.
We represent the latent assignment of datapoints to clusters as a partition $\pi\subseteq 2^S$ of datapoints into disjoint subsets.

The Chinese Restaurant process (CRP) representation of the DP gives a simple way to describe the generative process.
Let $\alpha > 0$ be a parameter, called the concentration parameter, $G(\theta)$ a prior on cluster parameters, called the base distribution, and $F(-|\theta)$ a likelihood model for observations given cluster parameters, called the component model.
The first datapoint is assigned to its own cluster and $\theta_1 \sim G, \; X_1 \sim F(-\mid \theta_1)$.
Recursively, let $\pi= \{\pi_k\}_{k=1}^{K}$ be a partition of the first $n-1$ indices 
and $\{\theta_k\}_{k=1}^{K}$ a set of parameters.
We generate datapoint $X_n$ as follows:
with probability $\propto \#\pi_k$, assign datapoint $n$ to cluster $k$ and draw $X_n \sim F(-\mid \theta_k)$;
with probability $\propto \alpha$, assign datapoint $n$ to a new cluster $K+1$ and draw $\theta_{K+1} \sim G,\; X_n \sim F(-\mid \theta_{K+1})$.

Importantly, the distribution on clusterings induced by the CRP is \emph{exchangeable}, i.e. invariant to index ordering.

\subsection{Gibbs Sampling in DPMMs}
\label{sec:dpmmgibbs}

The exchangeability property of the CRP suggests a simple Gibbs-sampling algorithm for sampling from the posterior distribution on clusterings \cite{neal2000markov}.
First, remove a random%
\footnote{We assume a random scanning schedule which is easier to analyze and not inferior to systematic scanning \cite{diaconis2008Gibbs}} 
datapoint $X_{\ast}$ (with $\ast$ drawn uniformly from $S$) to form a set $S'\gets S\setminus\{\ast\}$ and a restricted partition $\pi'$ of $S'$.
By exchangeability, the conditional distribution on assignments of $\ast$ is the same as if it were last in the index.
If $G$ and $F$ are conjugate, we can integrate out the cluster parameters and sample a new assignment

\begin{equation*}
    \label{condassignprob}
    \begin{split}
      &P[\ast\text{ adds to cluster }k]
      \propto \#\pi'_k\; p(X_{\ast} \mid X_{\pi'_k}) \\
      &P[\ast\text{ starts a new cluster}]
      \propto \alpha\; p(X_{\ast})
    \end{split} \tag{$\star$}
\end{equation*}

Where $p(X_\ast \mid X_{\pi'_k})$ is the marginal likelihood.

In the case where component models are conjugate, the latent state can be represented entirely by the assignment vector and the algorithm described is referred to as the \emph{collapsed} Gibbs-sampler.

\subsection{Subsample Annealing}

The key observation of this paper is that by decoupling the \emph{remove} and \emph{assign} parts of a Gibbs sampler, we can do approximate inference with proper subsamples $S$ of data points.
That is, after removing datapoint $r$, we add a datapoint $a$ possibly distinct from $r$.

Let $S$ be a possibly proper subset of the data indices, that is $S \subseteq\{1, \dots,N\}$.
We remove a random datapoint $r$ as above and form the set $S'$ and restricted partition $\pi'$.
We now draw a random datapoint $X_a$ not in $S'$ ($a$ drawn from $\{1,\dots,N\}\setminus S'$) to assign.
In the full data case the just-removed datapoint $\{r\}=\{1,\dots,N\}\setminus S'$ is always immediately reassigned, so we recover the classic Gibbs-sampling algorithm.
Finally we conditionally sample the partition assignment of $a$ according to \eqref{condassignprob}.

Typically the assignment step dominates the computational cost of the algorithm.
Important in nonparametric models, the number of clusters generally grows as the subsample size increases (logarithmically in the CRP, polynomially in the Pitman-Yor process), so the sampling step becomes more expensive as the subsample size grows.

\subsection{Subsample Annealing Schedules}
\label{sec:schedules}

Generally we can consider any subsample size schedule $\beta(t)N=\#S_t$ satisfying $|\beta(t+1)-\beta(t)|N\in\{+1,-1\}$.
For example the standard Gibbs sampler follows a constant schedule:
\begin{code}
    \codebf{Algorithm} \codeit{Prior+Gibbs inference strategy}
    1. Generate initial clustering of \(N\) datapoints.
    2. \codebf{for} \(t\) in \([1, ..., T]\):
    3.     \codebf{for} \(n\) in \([1, ..., N]\):
    4.         Remove a random assigned datapoint.
    5.         Reassign that datapoint.
\end{code}
All assignments are done with respect to the conditional distribution \eqref{condassignprob}.
Generally the initial clustering of the data is a draw from the prior.

Another strategy is to initialize by assigning datapoints sequentially, then run the Gibbs sampler on the full dataset.
That is, we could start by adding datapoints incrementally, sampling each assignment conditioned on all previous assignments.
In the subsample annealing frame this a two-part quench-then-mix schedule,
quickly ramping from the empty subsample to the full dataset, then running the Gibbs sampler for a long time at full dataset size:
\begin{code}
    \codebf{Algorithm} \codeit{Sequential+Gibbs inference strategy}
    1. Initialize empty.
    2. \codebf{for} \(n\) in \([1, ..., N]\):
    3.     Pick a random unassigned datapoint.
    4.     Assign the unassigned datapoint.
    5. \codebf{for} \(t\) in \([1, ..., T-1]\):
    6.     \codebf{for} \(n\) in \([1, ..., N]\):
    7.         Remove a random assigned datapoint.
    8.         Reassign that datapoint.
\end{code}
With little more coding effort one can implement a gradual linear subsample-annealing schedule
\begin{code}
    \codebf{Algorithm} \codeit{Anneal Subsample inference strategy}
    1. Initialize empty.
    2. \codebf{for} \(n\) in \([1, ..., N]\):
    3.     Pick a random unassigned datapoint.
    4.     Assign the unassigned datapoint.
    5.     \codebf{for} \(t\) in \([1, ..., T]\):
    6.         Remove a random assigned datapoint.
    7.         Pick a random unassigned datapoint.
    8.         Assign the unassigned datapoint.
\end{code}
where datapoints are gradually churned in and out of the latent state.

We have found in practice that this linear-growth schedule $\beta(t)\approx\frac t T$ performs well, and we analyze this schedule henceforth.
In choosing a schedule for real inference, it is important to consider the wall clock time cost of each iteration, which in practice differs from the Gibbs assignment count.
For example in CRP models, the cost of Gibbs assignment grows log(subsample size).

Like the Gibbs sampler, Subsample annealing composes well with other inference methods, e.g., for hyperparameter and structure inference.
\begin{wrapfigure}{r}{0.5\columnwidth}
    \vspace{-2mm}
    \hspace{-2mm}
    \includegraphics[width=0.5\columnwidth]{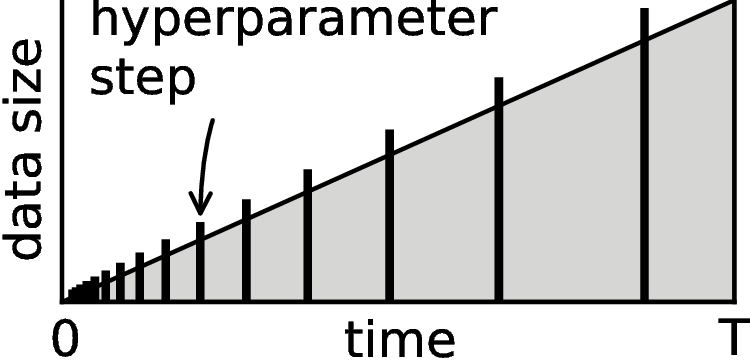}
    \vspace{-3mm}
\end{wrapfigure}
In practice we schedule hyperparameter inference steps to happen once per full cycle through the current subsample.
Early in the schedule, data churns relatively faster, so hyperparameter inference steps are performed more often.

%
%
%
%

\section{Convergence Analysis}
\label{sec:analysis}

We now analyze in detail the effect of subsample-annealing in two simple toy models, building intuition and proving asymptotic speedup.
Specifically we show that
(a) in a clustering model, subsample annealing speeds up mixing time $N^2\log(\frac 1 \epsilon)\to N\log(\frac 1 \epsilon)$, and
(b) in a simple two-mode energy barrier model, speeds up mixing time $\exp(N)\log(\frac 1 \epsilon)\to N\operatorname{poly}(\frac 1 \epsilon)$.

\subsection{Algorithm Interpretation}
\label{sec:analysis-interpretation}

One way to see the effect of subsample annealing is via the geometric interpretation of ``classical'' simulated annealing, in Figure~\ref{fig:landscape}.
While classical annealing vertically compresses the energy landscape by a factor $\beta$, subsample annealing additionally%
\footnote{in many cases, however the first toy model below is an exception.}
horizontally compresses the energy landscape by the same factor $\beta$,
corresponding in the Langevin dynamics limit to a quadratic $\beta^{-2}$ diffusion speedup%
\footnote{Langevin dynamics is invariant under the transformation $(x,t)\mapsto(\beta x,\beta^2 t)$.}%
.
Thus the two effects of subsample annealing are (a) local quadratic speedup, and (b) classical simulated annealing, which allows faster mixing between modes, as shown below.

\subsection{Quadratic Local Speedup}
\label{sec:analysis-local}

Our first toy model distills the space-compressing behavior of subsample annealing.
Although in many clustering models, energy scales linearly with data size, we choose for the moment a single-feature model where the data probability exactly balances the prior, yielding no energy scaling, and hence simplifying asymptotic analysis.

Consider a two-component mixture model of boolean data, which we shall think of as a balls-in-urns model.
With known probability $p$ balls are either generated from the \verb$left$ or \verb$right$ urn.
The left urn generates balls which are \verb$red$ with unknown probability $p_l$ and \verb$blue$ with probability $1-p_l$.
The right urn does the same according to parameter $p_r$.
After generating $N$ balls, we observe the balls and their colors.
We assume a beta prior on $p_l$ and $p_r$ with hyperparameter $\alpha>0$.
We are interested in the posterior assignment of balls to urns.

We integrate out $p_l$ and $p_r$ by conjugacy and 
since datapoints of the same color are indistinguishable, we project down to the equivalent inference problem of inferring the counts of red and blue balls in the left and right urn from the total red and blue counts.

Intuitively the posterior is multimodal, as the beta priors prefer segregation.
Most likely all the red balls came from the right urn or they all came from the left urn.
Figure~\ref{fig:example2-fig2} shows the distribution of latent states after different inference strategies of equal time-cost, in order of decreasing total variational distance (TVD) from the true posterior, which is indeed multimodal.
The parameters settings shown are the Jeffreys prior $\alpha=\frac 1 2$, and a slight bias $p=0.45$ towards one urn.
The data consists of 8000 red balls and 12000 blue balls.

To understand the effects of dataset scale when subsampling large datasets, we analyze the large-data continuous limit of the subsample-annealed Gibbs-sampler, where inference can be modeled by Langevin dynamics.

Consider a growing dataset with a fixed ratio of red-to-blue datapoints (2:3 in Figure~\ref{fig:example2-fig2}).
Parametrize the latent state by intrinsic variables
\begin{align*}
  x &= \frac{\#\text{red on left}}{\#\text{red total}},
& y &= \frac{\#\text{blue on left}}{\#\text{blue total}}
\end{align*}
\begin{lemma}
\label{thm:continuum}
In the continuum limit as $N\to\infty$ with the proportion of red balls fixed to $r\in(0,1)$,
the single-site Gibbs sampler's effect can be described by a Fokker-Planck PDE \cite{gardiner2009stochastic}
\begin{align*}
  \frac \partial {\partial t} p(x,y,t)
  &= -\frac 1 N \nabla \left( f(x,y,t,\frac \alpha N, r) p(x,y,t) \right) \\
  &+ \frac 1 {2 N^2} \nabla^2 \left( D(x,y,t,\frac \alpha N, r) p(x,y,t) \right)
\end{align*}
with drift vector $f$ and diffusion matrix $D$ depending only on intrinsic quantities, invariant of dataset size.
\end{lemma}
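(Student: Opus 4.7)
The plan is a diffusion (Kramers--Moyal) approximation to the collapsed-Gibbs Markov chain. First, I would fix notation at the discrete level: set $N_R=rN$, $N_B=(1-r)N$ and write the urn counts $L_R=rNx$, $L_B=(1-r)Ny$, $R_R=rN(1-x)$, $R_B=(1-r)N(1-y)$, so that $(x,y)$ lives on a lattice of spacing $(1/(rN),\,1/((1-r)N))$. One Gibbs step picks a uniformly random ball, removes it, and reassigns it via $(\star)$; by conjugacy the reassignment probabilities are beta-binomial predictive ratios depending only on the current urn compositions and on the rescaled hyperparameter $\alpha/N$.

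Second, I would compute the one-step conditional mean and covariance of the increment $(\Delta x,\Delta y)$ by summing over the four cases (ball color in red/blue, source urn in left/right). A red-ball move shifts $x$ by $\pm 1/(rN)$; a blue-ball move shifts $y$ by $\pm 1/((1-r)N)$. Substituting the beta-binomial predictive probabilities and simplifying, the first moments collapse to the form $N^{-1}f(x,y,\alpha/N,r)$ and the second moments to $N^{-2}D(x,y,\alpha/N,r)$, where $f$ and $D$ depend on $(x,y)$ only through the intrinsic ratios: the counts entering the reassignment probabilities and the lattice spacing contribute matching powers of $N$ that cancel exactly.

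Third, I would invoke the standard Kramers--Moyal expansion of the master equation in the step size $1/N$. Truncation after the second jump moment reproduces the stated Fokker--Planck PDE; the third and higher jump moments are $O(N^{-3})$ and vanish in the diffusion scaling, which is the usual van Kampen system-size argument \cite{gardiner2009stochastic}.

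The main obstacle is careful bookkeeping in the moment calculation: the post-removal counts introduce ``$-1$'' corrections that must be propagated through the beta-binomial formulas, and one must verify that after simplification every remaining $N$-dependence folds into the intrinsic variables $(x,y,\alpha/N,r)$. A secondary subtlety is the behavior at the boundary $x,y\in\{0,1\}$, where reassignment is dominated by the prior term $\alpha/N$ rather than by empirical counts; for an interior continuum statement we restrict to densities supported away from the boundary and defer boundary conditions to a separate treatment.
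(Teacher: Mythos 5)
Your proposal is correct and follows essentially the same route as the paper's proof: enumerate the possible single-site Gibbs moves, compute the first and second moments of the increment in $(x,y)$, observe that they scale as $1/N$ and $1/N^2$ respectively while depending only on the intrinsic quantities and $\alpha/N$, and identify these as the drift and diffusion coefficients of the Fokker--Planck equation. Your explicit appeal to the Kramers--Moyal/van Kampen truncation and your remarks on the $-1$ count corrections and boundary behavior are slightly more careful than the paper's argument, but the substance is the same.
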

(see Appendix~\ref{appendix:local} for proof)
As $N$ grows and $\alpha/N$ is held fixed, the diffusion rate scales as $1/N^2$.
Thus early in the schedule, subsample annealing mixes quadratically faster; most of the mixing happens early in the schedule.

\begin{theorem}
\label{thm:local}
Consider a two-urn model with $\alpha > 0$ and $p\in(0,1)$ fixed, and a constant ratio of red:blue balls.
To bound total variational distance below  $\epsilon\in(0,1)$,
(a) cold inference ($\beta = 1$) requires time
$$
  T_{\text{cold}} = O \left( N^2 \log\left(1/\epsilon\right) \right),
$$
and (b) annealing at schedule $\beta = \frac t T$ requires time
$$
  T_{\text{anneal}} = O \left( N \log\left(1/\epsilon\right) \right).
$$
\end{theorem}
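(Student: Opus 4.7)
The plan is to reduce both parts to the spectral gap of the continuum Fokker-Planck generator of Lemma~\ref{thm:continuum} and then assemble them through a time-inhomogeneous Markov-chain argument. First I would establish that the gap of this generator is $\Theta(1/N^2)$: since $f$ and $D$ depend only on intrinsic $(x,y)$ and on $\alpha/N$ over the compact domain $[0,1]^2$ and $D$ is uniformly elliptic there, a standard Poincar\'e inequality gives a gap equal to the diffusion prefactor $1/N^2$ times an $O(1)$ intrinsic factor, yielding exponential TV contraction at rate $\Theta(1/N^2)$.

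Part (a) is then immediate: from any initial state, $\|p_t-\pi_N\|_{\text{TV}}\le Ce^{-ct/N^2}$, so $t = O(N^2\log(1/\epsilon))$ suffices. For part (b), the annealed chain at Gibbs-step $t\in\{1,\ldots,T\}$ is (up to churn) the Gibbs chain on an effective dataset of size $n(t)=\lceil tN/T\rceil$, so by the previous paragraph its instantaneous contraction rate toward its current target posterior $\mu_{n(t)}$ is $g(t)=\Omega(1/n(t)^2)$. The cumulative contraction over the schedule is
\begin{align*}
\sum_{t=1}^{T} g(t) \;=\; \Omega\!\left(\frac{T}{N}\sum_{n=1}^{N}\frac{1}{n^{2}}\right) \;=\; \Omega\!\left(\frac{T}{N}\right),
\end{align*}
because $\sum_{n\ge 1} n^{-2}<\infty$; choosing $T = cN\log(1/\epsilon)$ then makes this exceed $\log(1/\epsilon)$. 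This convergent harmonic-type sum is the source of the quadratic speedup: almost all of the ``effective mixing time'' is collected while the subsample is still small.

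The hard part is making the last step honest, because the target $\mu_{n(t)}$ itself moves each time a new datapoint is absorbed, so the standard homogeneous contraction bound does not apply directly. I would use the telescoping inhomogeneous-chain inequality
\begin{align*}
\|p_T-\mu_N\|_{\text{TV}} \;\le\;& \|p_0-\mu_0\|_{\text{TV}}\prod_{t=1}^{T}(1-g(t))\\
&+\sum_{s=1}^{T}\|\mu_{n(s)}-\mu_{n(s)-1}\|_{\text{TV}}\prod_{t=s+1}^{T}(1-g(t)),
\end{align*}
paired with an elementary Hellinger estimate for $\|\mu_n-\mu_{n-1}\|_{\text{TV}}$ obtained from the one-observation update of a beta posterior. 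The weighted sum closes because the perturbation is smallest at large $n$, which is exactly where the remaining damping is weakest; a secondary wrinkle is that Lemma~\ref{thm:continuum} is a continuum limit and its gap estimate degenerates for $n=O(1)$, so in that regime I would replace it with a direct $O(1)$ mixing bound on the finite-state chain, contributing only an additive constant that does not affect the asymptotics.
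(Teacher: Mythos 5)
Your core mechanism agrees with the paper's: the per-step contraction rate at subsample size $n$ is $\Theta(1/n^2)$ by Lemma~\ref{thm:continuum}, and under the linear schedule the cumulative contraction $\sum_t g(t)=\Theta\bigl(\tfrac{T}{N}\sum_{n\le N}n^{-2}\bigr)=\Theta(T/N)$ is precisely the paper's ``effective schedule length'' integral $\int dt/\beta(t)^2$, yielding $T_{\text{anneal}}=O(N\log(1/\epsilon))$. Two steps of your plan do not go through as stated, however. First, $D$ is not uniformly elliptic: the variance terms computed in the Lemma are proportional to $x(1-x)$ (and $y(1-y)$), so the diffusion degenerates on the boundary of $[0,1]^2$, as in a Wright--Fisher diffusion. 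A gap of order $1/N^2$ is still plausible, but it is not delivered by a standard Poincar\'e inequality for uniformly elliptic generators; you would need the degenerate-diffusion version or a direct Dirichlet-form/coupling bound on the discrete chain.

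Second, and more seriously, your moving-target step cannot close in the form you propose. Because $\sum_m m^{-2}$ converges, essentially all of the contraction budget is spent while the subsample is small: the damping $\prod_{t>s}(1-g(t))$ accumulated after the subsample reaches size $n$ is roughly $\exp\bigl(-cC\log(1/\epsilon)\,\tfrac{N-n}{nN}\bigr)$ when $T=CN\log(1/\epsilon)$, which is $1-o(1)$ for every $n\gg\log(1/\epsilon)$. So the tail of your perturbation sum is essentially undamped, and a one-observation Hellinger bound for a beta posterior gives TV increments of order $n^{-1/2}$, whose undamped sum diverges with $N$ (even $O(1/n)$ increments give $\log N$). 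Moreover the relevant target $\mu_n$ is the collapsed posterior over the intrinsic assignment fractions $(x,y)$ --- a bimodal, non-concentrating distribution on an $n$-dependent grid --- not the beta posterior over $(p_l,p_r)$, so the Hellinger estimate is aimed at the wrong object. The theorem is true only because of the model-specific fact the paper's proof leans on: in intrinsic coordinates the drift, diffusion, and hence invariant measure are (up to time rescaling) independent of $n$ once $n\ge N_\epsilon$, so all intermediate targets are within $\epsilon$ of the final one and no increment sum is needed --- the answer is ``locked in'' while the subsample is still of size $O(N_\epsilon)$ and the dynamics merely preserves it thereafter. Replacing your telescoping step with that scale-invariance observation (or the direct bound $\|\mu_n-\mu_N\|_{\mathrm{TV}}\le\epsilon$ for $n\ge N_\epsilon$) recovers the paper's argument; as written, your argument would also (incorrectly) suggest the schedule can track targets that genuinely drift at large $n$, which it cannot.
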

(see Appendix~\ref{appendix:local} for proof)
Thus in the absence of data-linear energy barriers between modes, subsample annealing scales linearly with data.
We shall see below that this data-linearity is preserved even in the presence of some energy barriers.

Inspired by the energy-and-space contraction result above, we developed a full-data inference strategy that behaves like subsample annealing in the continuum limit, by block Gibbs sampling on a $1/\beta$-sized hand full of uniformly-colored balls at a time, moving them from one urn to another.
This ``Anneal Stepsize'' strategy, measured in Figure~\ref{fig:example2-fig2},
is only of theoretical interest, being more expensive and difficult to generalize to other models.

\begin{figure*}[t]
\includegraphics[width=1.0\textwidth]{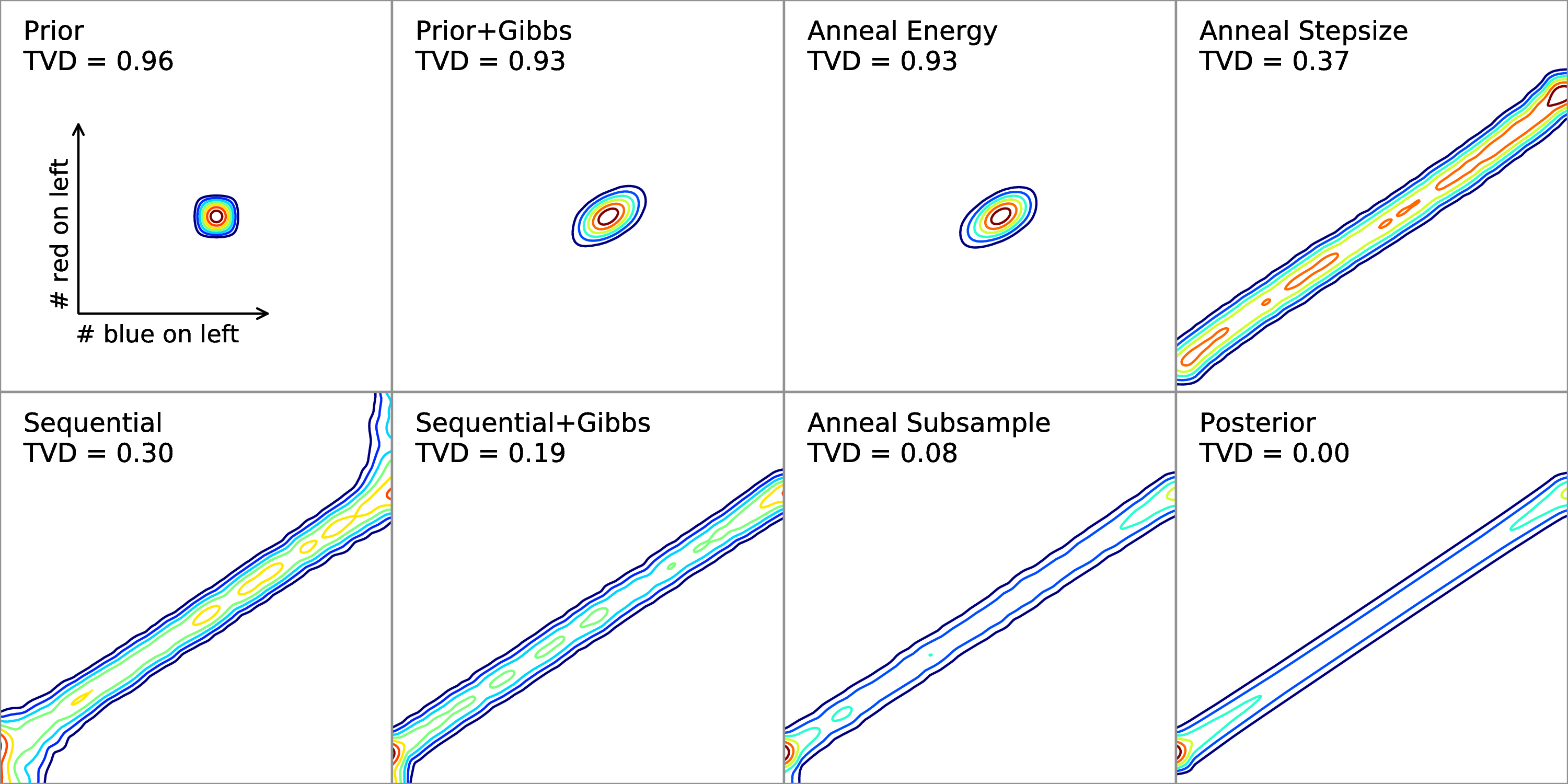}
\caption{
Distribution of latent states after $10N$ Gibbs steps with $N=20000$ datapoints (= 8000 red + 12000 blue).
Prior and posterior are exact; others are smoothed histograms of $10^6$ Monte Carlo samples.
Total variational distance (TVD) is computed after smoothing.
The Sequential strategy only performs $N$ initial Gibbs assignments.
The Sequential+Gibbs strategy performs the $N$ sequential assignments + $9N$ full-data Gibbs steps.
The Anneal Energy strategy runs Gibbs inference on full data but with a ``hot'' data likelihood.
The Anneal Stepsize strategy is a full-data inference strategy that behaves like subsample annealing in the continuum limit.
}
\label{fig:example2-fig2}
\end{figure*}

\subsection{Exponential Global Speedup}
\label{sec:analysis-global}

We next show that simulated annealing to temperature 1 can provide exponential speedup over naive MCMC in a simple model of a bimodal system.
In many models, energy barriers are proportional to dataset size, so subsample annealing behaves like classical simulated annealing (with a quadratic speedup, as above).

Our next toy model distills the behavior of inference around data-linear energy barriers.
The previous clustering model has two modes but was constructed to have a limiting energy constant in $N$, leading to the continuous limit.
Slight modifications lead to a bimodal system with limiting energy proportional to dataset size, for example: inferring cluster parameters rather than marginalizing over them; or learning multiple features; or using other conjugate feature models such as normal-inverse-$\chi^2$.

Consider a system with two modes separated by a low-probability barrier, for example a two-feature version of the above two-urn model.
Assuming within-mode mixing is much faster than inter-mode jumping,
we project the entire state down to the probability masses $[x, 1-x]$ in the two modes.
Let $\gamma N$ be the energy gap separating modes and $\delta N$ be the energy barrier between modes, both proportional to data size.
\begin{lemma}
\label{thm:dynamics}
The continuous-time dynamics of inference at inverse temperature $\beta$ is
\begin{align}
\label{eqn:dynamics}
  \frac {dx} {dt}
  &= \exp(-\beta\delta N) \left[ \frac 1 {1+\exp(-\beta\gamma N)} - x \right]
\end{align}
\end{lemma}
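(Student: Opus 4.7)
The plan is to reduce the Gibbs dynamics on this two-mode system to a two-state continuous-time Markov chain, and then read off the stated ODE from the master equation using detailed balance together with a transition-state estimate of the inter-mode rate.

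First I would invoke the metastability hypothesis stated just above the lemma: because within-mode mixing is much faster than inter-mode jumping, on the slow timescale the state is effectively described only by the two probability masses $x$ and $1-x$, and the reduced process is a two-state continuous-time Markov chain with rates $k_{12}$ (mode 1 to mode 2) and $k_{21}$ (mode 2 to mode 1). Step one is to argue quantitatively that the separation of timescales makes this projection valid at leading order in the exponential small parameter $e^{-\beta\delta N}$.

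Next I would pin down the equilibrium mass. Because Gibbs sampling at inverse temperature $\beta$ is reversible with respect to the tempered posterior, and because mode 2 sits $\gamma N$ above mode 1 in energy, the detailed balance condition $k_{12}\pi_1 = k_{21}\pi_2$ together with $\pi_2/\pi_1 = e^{-\beta\gamma N}$ gives the unique stationary mass $x^\ast = \pi_1 = 1/(1+e^{-\beta\gamma N})$ in mode 1, matching the bracketed term in the claim. Then I would supply the overall rate scale by an Arrhenius / transition-state argument: the chain can only exchange mass between the two modes by sampling inside the barrier region, whose equilibrium Boltzmann mass relative to the wells scales as $e^{-\beta\delta N}$; so the combined escape rate $k_{12}+k_{21}$ is of order $e^{-\beta\delta N}$, and absorbing any order-unity prefactor into the definition of $\delta N$ gives $k_{12}+k_{21}=e^{-\beta\delta N}$ exactly.

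Finally I would just assemble the master equation
$$\frac{dx}{dt} \;=\; k_{21}(1-x) - k_{12}x \;=\; (k_{12}+k_{21})\bigl(x^\ast - x\bigr),$$
and substitute the two expressions from the previous steps to recover \eqref{eqn:dynamics}. The main obstacle is the transition-state estimate of step three: making it precise for a discrete-time single-site Gibbs chain (rather than for a physical Langevin diffusion where the Eyring--Kramers formula applies directly) requires showing that the barrier-crossing rate is $e^{-\beta\delta N}$ up to at most sub-exponential corrections, which can be done either by an explicit hitting-time calculation on the reduced chain or, more cheaply, by taking this identity as the operational definition of $\delta N$; either way the remaining bookkeeping is immediate.
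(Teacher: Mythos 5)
Your proposal is correct and matches the paper's argument in substance: the paper likewise models the system as a two-state chain whose total escape rate is posited to be $\exp(-\beta\delta N)$ (the state ``jumps onto the barrier'' at that rate and then redistributes according to the tempered equilibrium $\pi_\beta$), which is exactly your $k_{12}+k_{21}=\exp(-\beta\delta N)$ combined with detailed balance giving $x^\ast=\sigma(\beta\gamma N)$. Your framing via the master equation and your explicit acknowledgement that the Arrhenius rate is taken as the operational definition of $\delta$ is, if anything, a more careful statement of the same heuristic derivation.
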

(see Appendix~\ref{appendix:global} for proof)
We are then interested in running inference from time 0 to some time $T$, starting from an arbitrary ``data blind'' initial state.
\begin{theorem}
\label{thm:global}
Assume the energy barrier is positive $\delta>0$ and the dataset is larger than necessary to observe the energy gap $N \gg \frac {\log(2/\epsilon)} \gamma$.
Then to bound total variational distance below  $\epsilon\in(0,1)$,
(a) cold inference ($\beta = 1$) requires time
$$
  T_{\text{cold}} = O
  \left(
    \exp(N\delta) \log\left(1/\epsilon\right)
  \right),
$$
and (b) annealing at schedule $\beta = \frac t T$ requires time
$$
  T_{\text{anneal}} = O
  \left(
    N \delta
    \log\left(1/\epsilon\right)
    \left(1/\epsilon \right)^{\frac \delta \gamma}
  \right).
$$
\end{theorem}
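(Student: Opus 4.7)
My plan is to exploit that Lemma~\ref{thm:dynamics} reduces everything to a \emph{linear} scalar ODE $\dot x = \kappa(\beta(t))[x^*(\beta(t)) - x]$ with relaxation rate $\kappa(\beta) = e^{-\beta\delta N}$ and moving equilibrium $x^*(\beta) = 1/(1+e^{-\beta\gamma N})$. Writing $G(t) = \int_0^t \kappa(\beta(s))\,ds$, the integrating-factor solution followed by an integration by parts yields
\[
x(T) - x^*(\beta(T)) = e^{-G(T)}\bigl(x(0) - x^*(0)\bigr) - e^{-G(T)}\!\int_0^T\! e^{G(s)}\tfrac{d}{ds}x^*(\beta(s))\,ds,
\]
a clean sum of an ``initial-condition memory'' term and an ``adiabatic lag'' term; the whole proof reduces to asking how large $T$ must be to push both below $\epsilon/2$.

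Part (a) is a one-line consequence: at $\beta\equiv 1$ the ODE is autonomous, the adiabatic lag vanishes, and the hypothesis $N\gg\gamma^{-1}\log(2/\epsilon)$ already puts $x^*(1)$ within $\epsilon/2$ of $1$, so demanding $e^{-T e^{-\delta N}}|x(0)-x^*(1)| \le \epsilon/2$ gives the claimed $T_{\text{cold}} = O(e^{\delta N}\log(1/\epsilon))$.

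For part (b) I set $A := T/(\delta N)$ and change variables to $\beta = s/T$; the weight $e^{G(s)-G(T)}$ simplifies (up to a negligible $e^{-\delta N}$ correction under the hypothesis on $N$) to $w(\beta) := \exp(-A e^{-\beta\delta N})$, which is monotone increasing in $\beta$ from $e^{-A}$ to $\approx 1$. The adiabatic lag becomes $\int_0^1 w(\beta)\,\frac{dx^*}{d\beta}\,d\beta$, a weighted integral of a unit-mass density concentrated on scale $1/(\gamma N)$ near $\beta=0$. I would split at $\beta_1 := \log(2/\epsilon)/(\gamma N)$: on $[0,\beta_1]$ bound $w\le w(\beta_1)$ and use $\int dx^*/d\beta \le 1$; on $[\beta_1,1]$ bound $w\le 1$ together with $1-x^*(\beta_1) \le e^{-\beta_1\gamma N} = \epsilon/2$. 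Forcing $w(\beta_1)\le\epsilon/2$ then requires $A e^{-\beta_1\delta N} \ge \log(2/\epsilon)$, which rearranges to $A \ge \log(2/\epsilon)\,(2/\epsilon)^{\delta/\gamma}$ and yields the stated $T_{\text{anneal}}$.

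The main obstacle is picking $\beta_1$ correctly: pushing it toward $0$ makes $w(\beta_1)$ tiny but leaves the mass tail $1-x^*(\beta_1)$ of order one, while pushing it toward $1$ kills the mass tail but sends $w(\beta_1)\to 1$. The balancing scale $\beta_1 = \Theta(\log(1/\epsilon)/(\gamma N))$ is precisely the inverse temperature at which $x^*$ first enters an $\epsilon$-neighborhood of $1$, and the proof's moral is that we must reach that critical temperature before the cumulative relaxation budget $A e^{-\beta\delta N}$ has decayed below $\log(1/\epsilon)$; converting that constraint back to $T$ produces the extra factor $(1/\epsilon)^{\delta/\gamma}$ beyond the $\log(1/\epsilon)$ that already appears in the cold case.
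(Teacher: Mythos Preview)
Your argument is correct and lands on exactly the same constraint as the paper: both reduce to demanding that at the critical inverse temperature $\beta_1=\log(2/\epsilon)/(\gamma N)$ the accumulated relaxation $A e^{-\beta_1\delta N}$ exceed $\log(2/\epsilon)$, which rearranges to the stated $T_{\text{anneal}}$.

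The organization, however, differs. The paper does \emph{not} integrate by parts into a memory-plus-adiabatic-lag decomposition. Instead it fixes the worst-case initial condition $x(0)=0$ and performs a single change of variables to a ``natural time'' $\tau=e^{G(s)-G(T)}$ (your $w(\beta)$, up to the same $e^{-\delta N}$ correction you drop), obtaining the closed form $x=\int_0^1\sigma(\beta(\tau)N\gamma)\,d\tau$; it then bounds $\mathrm{TVD}\le\int_0^1 e^{-\beta(\tau)N\gamma}\,d\tau$ via the pointwise inequality $\sigma(a)-\sigma(\beta a)\le e^{-\beta a}$ and splits the $\tau$-integral at $\tau=\epsilon/2$. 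Since $\tau\approx w(\beta)$, the paper's split at $\tau=\epsilon/2$ and your requirement $w(\beta_1)\le\epsilon/2$ are the same condition viewed in dual coordinates. Your integration-by-parts route is a bit more transparent (the memory/adiabatic-lag language is physically standard and handles arbitrary $x(0)$ without committing to a worst case), while the paper's $\tau$-substitution is terser, collapsing everything to one uniform integral at the cost of an unmotivated-looking coordinate. Either way the mathematics is the same.
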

(see Appendix~\ref{appendix:global} for proof)
Thus in this toy model, simulated annealing to temperature 1 can achieve exponential speedup as data size grows, at least when resolving features $\gamma$ gross enough to be detectable already with a small subsample of data (or equivalently already at a high temperature).
Practically Theorem~\ref{thm:global} means that annealing with progressively longer durations $T$, we can resolve progressively finer features of the data, whereas cold inference resolves features at rates independent of their significance $\gamma$, i.e. depending only on $\delta$.
Thus simulated annealing gracefully degrades in quality depending on inference difficulty $\delta$ and feature significance $\gamma$; it learns a combination of easy-to-infer fine features and difficult-to-infer gross features.

Theorem~\ref{thm:global} concerns classical simulated annealing in a two state model.
To relate this exponential speedup back to subsample annealing in clustering models, observe that when subsampling, the dynamics in Equation~\ref{eqn:dynamics} would only speed up (by an extra factor of $N^{-2}$).
Hence at fixed feature grossness $\gamma$, subsample annealing also mixes exponentially faster.

\section{Experimental Results}
\label{sec:experiments}

To test the subsample annealing technique, we use a Pitman-Yor \cite{pitman1997two} extension of the Cross-categorization model \cite{mansinghka2009cross} which first partitions features via a Pitman-Yor process, and then models each set of features as a Pitman-Yor mixture of a product of features.
\begin{wrapfigure}{r}{0.48\columnwidth}
    \vspace{-5mm}
    \begin{center}
        {\sf \small A sample }\\[-1mm]
        {\sf \small cross-categorization}
        \includegraphics[width=0.48\columnwidth]{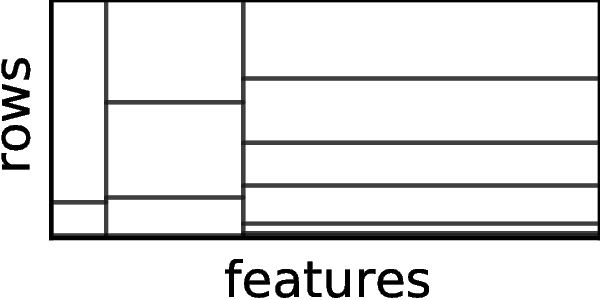}
    \end{center}
    \vspace{-4mm}
\end{wrapfigure}
Categorical features are modeled as a mixture of multinomials with a non-uniform Dirichlet prior, and real-valued features are modeled as a mixture of Gaussians with a normal-inverse-$\chi^2$ prior.
In addition to feature and data partitions, we learn Pitman-Yor hyperparameters $(\alpha,d)$ for each feature-set's partition of data and for the overall partition of features.
For each categorical feature we learn its Dirichlet hyperparameters, and for each real-valued feature we learn its normal-inverse-$\chi^2$ hyperparameters \cite{fink1997compendium}.
All hyperparameter priors are discrete grids spanning a wide range of values.

We learn hyperparameters by Gibbs-sampling conditioned on data-to-cluster assignments.
To learn the partitioning of features, we use a non-conjugate Gibbs-sampler and a Metropolis-Hastings sampler to propose new feature sets from the Pitman-Yor prior.
This last proposal step is sensitive to data size and appears to significantly benefit from subsample annealing.

We analyze three dataset:
a small 307-row hospital ranking dataset with 63 real-valued features \cite{wennberg2008tracking};
$10^4$--$10^6$ row subsamples of a US Census dataset \cite{Bache+Lichman:2013} with 68 categorical features; and
$10^4$--$10^6$ row subsamples of the KDD Cup 1999 network log dataset \cite{hettich99kddcup} with 40 features including both real-valued and categorical.


We assess inference quality by random cross-validation.
Each sample trains on a random subset of $7/8$ of the data and is scored on the remaining $1/8$ of data using the function
\[
  \sum_{t\in\text{test}} \log P(t\mid\text{trained model})
\]
To show outlying bad samples, we plot individual samples' crossvalidation scores as well as the mean log score for each algorithm.
To compare quality across datasets, we shift and scale log scores to be zero-mean unit-variance within each dataset.

\begin{figure*}[t!]
\begin{tabular}{c @{\hspace{15mm}} c}
\multicolumn{2}{c}{
\textsf{\small Hospitals 307$\times$63}
} \\
\multicolumn{2}{c}{
\includegraphics[width=0.41\textwidth]{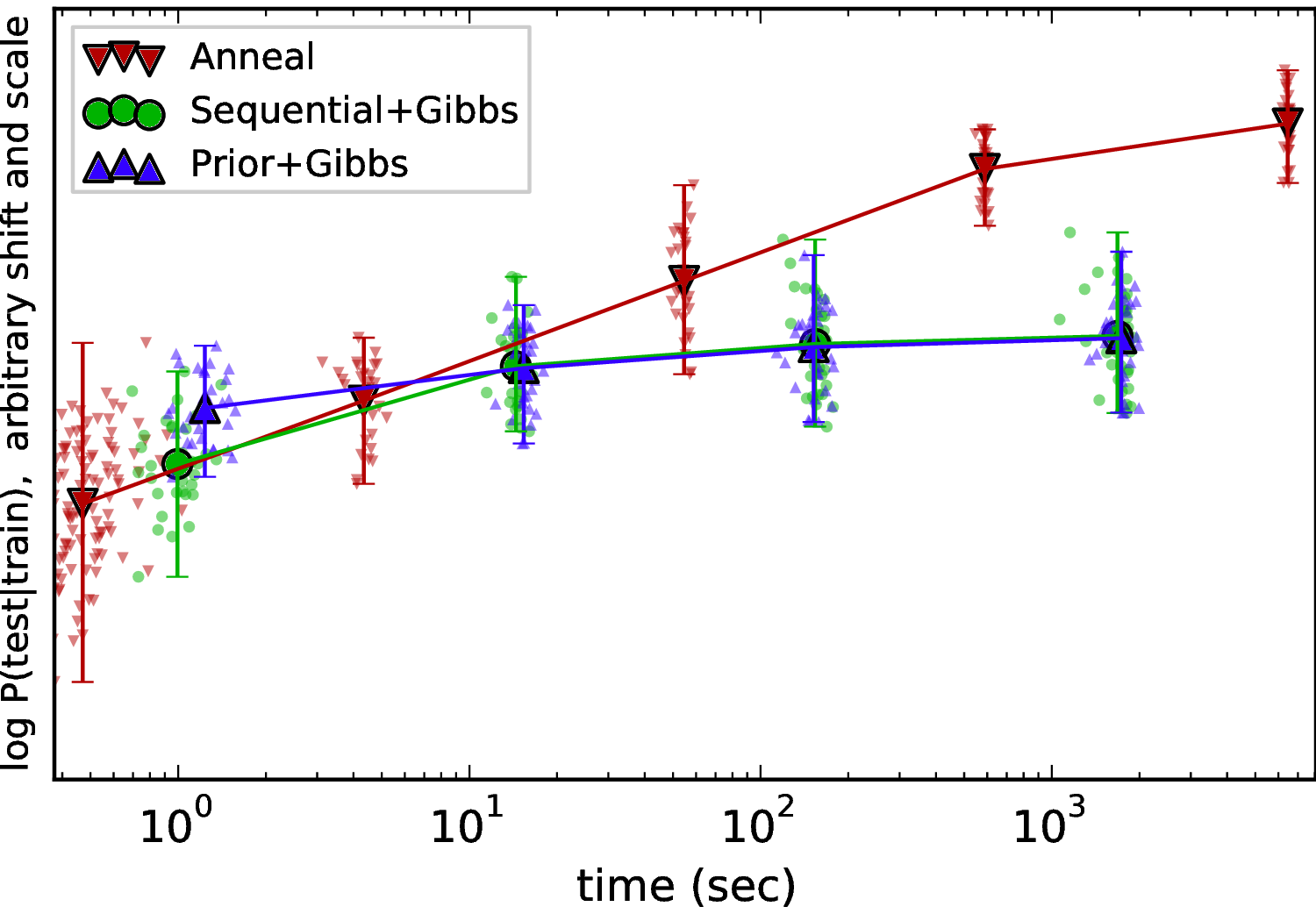}
} \\
\begin{tabular}{c}
\textsf{\small Network 10000$\times$40} \\
\includegraphics[width=0.41\textwidth]{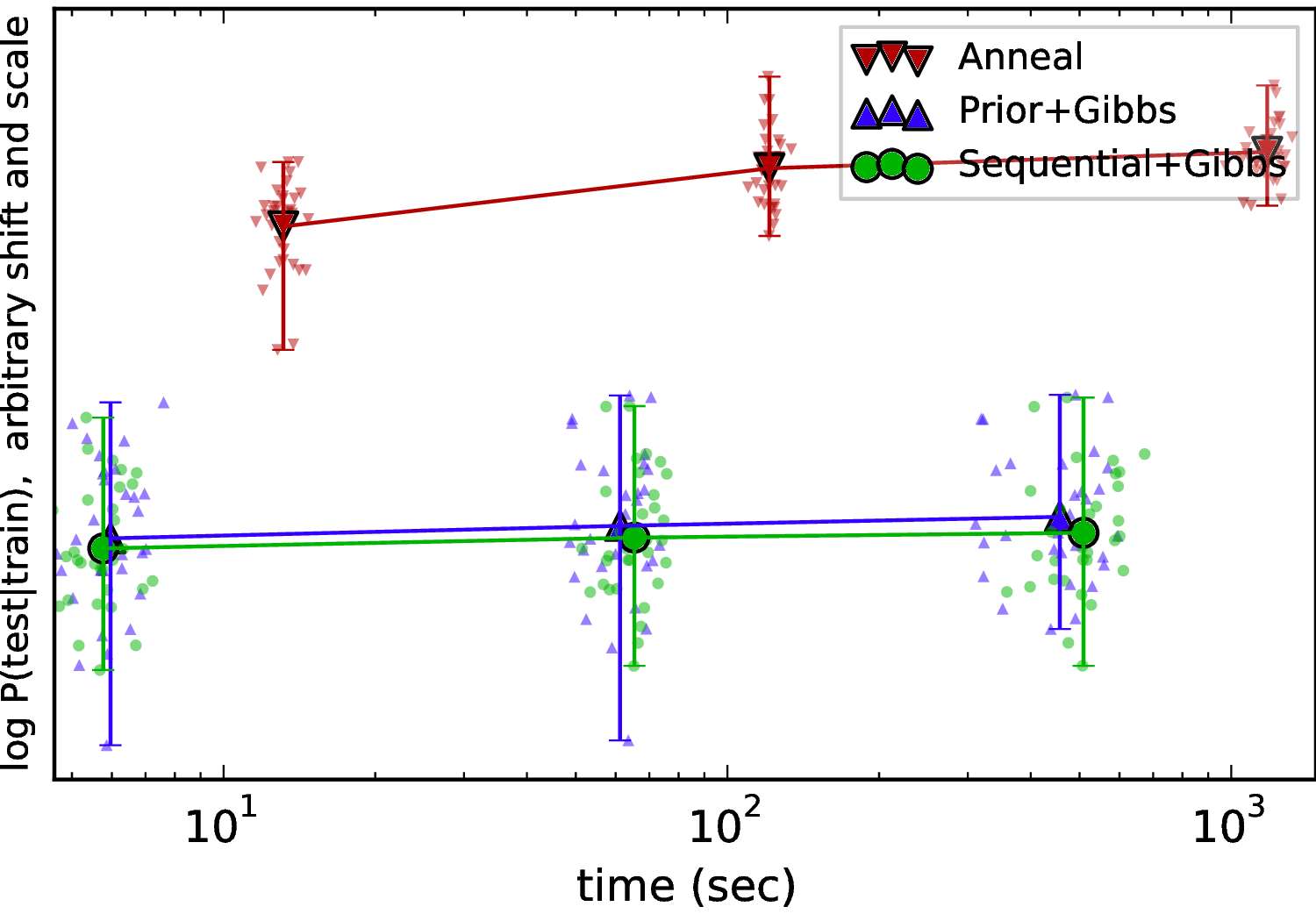} \\
\textsf{\small Network 100000$\times$40} \\
\includegraphics[width=0.41\textwidth]{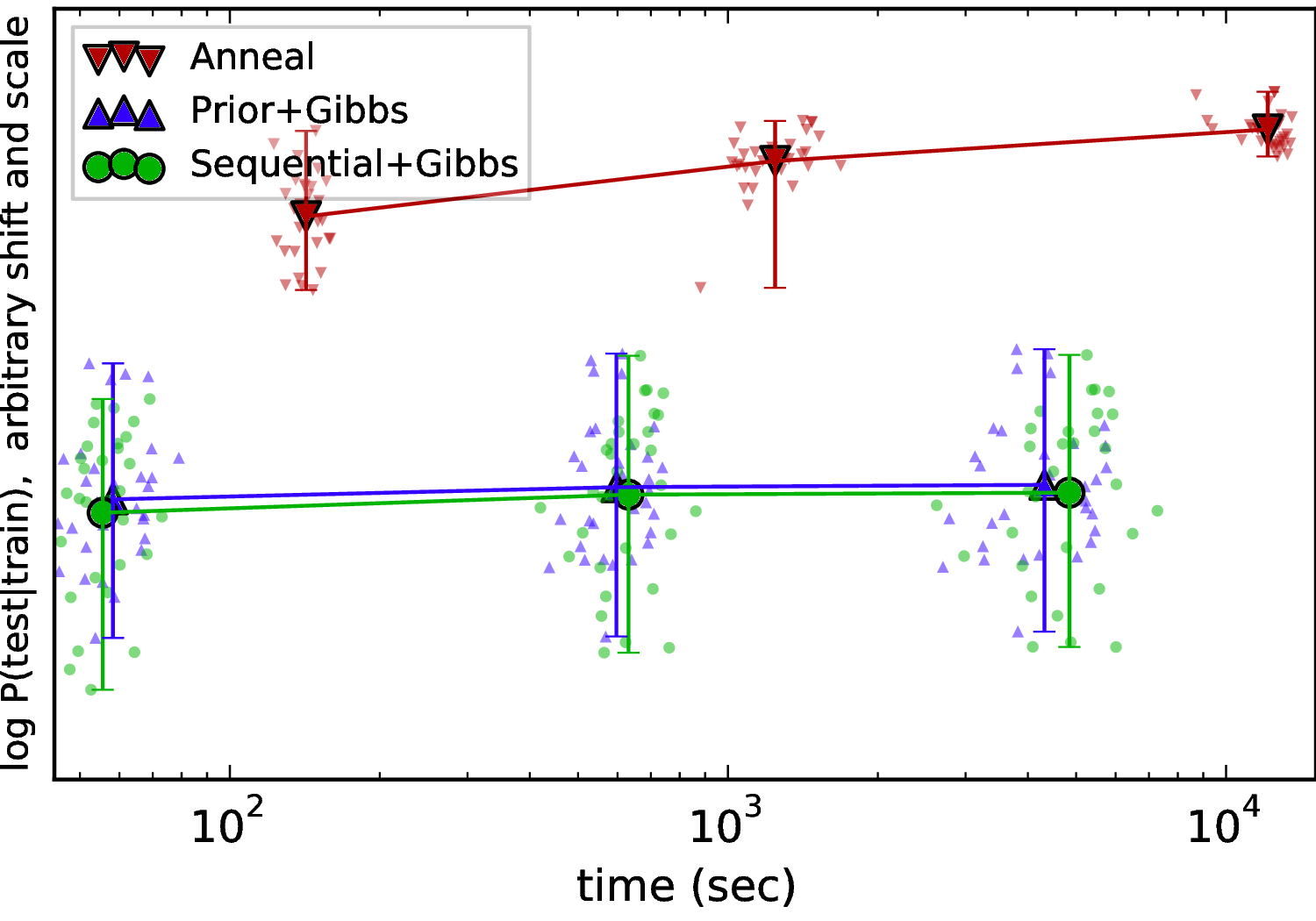} \\
\textsf{\small Network 1000000$\times$40} \\
\includegraphics[width=0.41\textwidth]{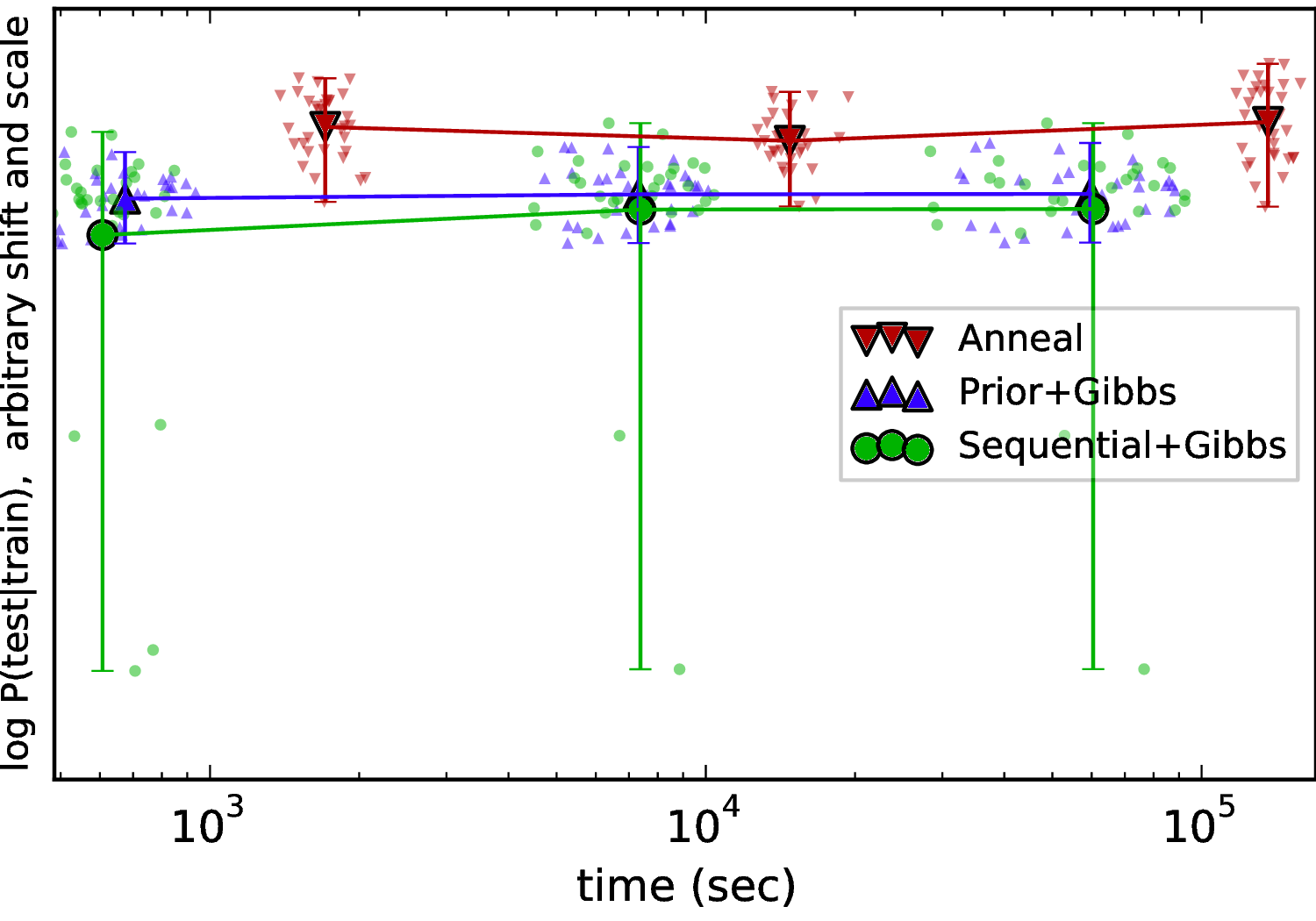}
\end{tabular} &
\begin{tabular}{c}
\textsf{\small Census 10000$\times$68} \\
\includegraphics[width=0.41\textwidth]{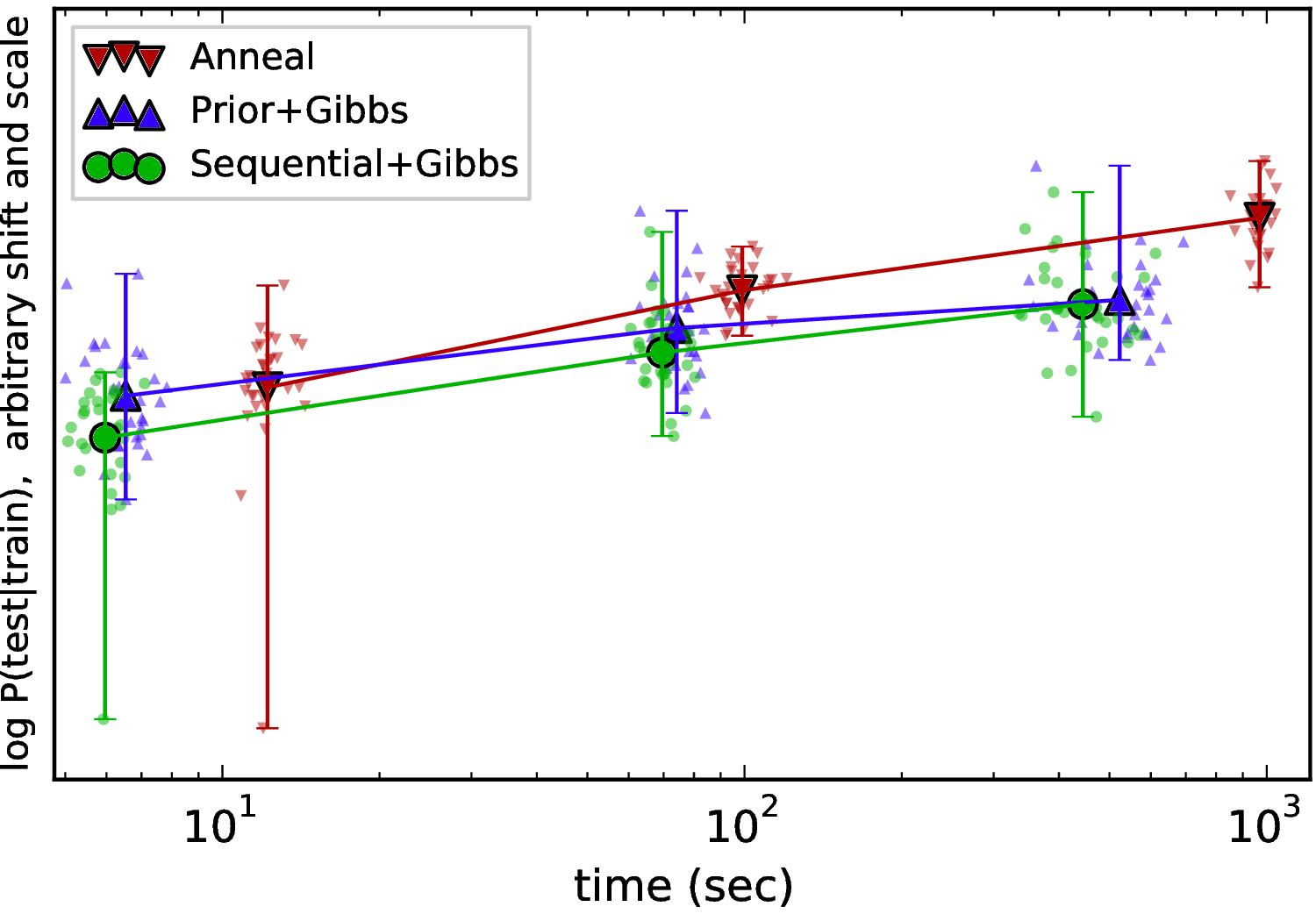} \\
\textsf{\small Census 100000$\times$68} \\
\includegraphics[width=0.41\textwidth]{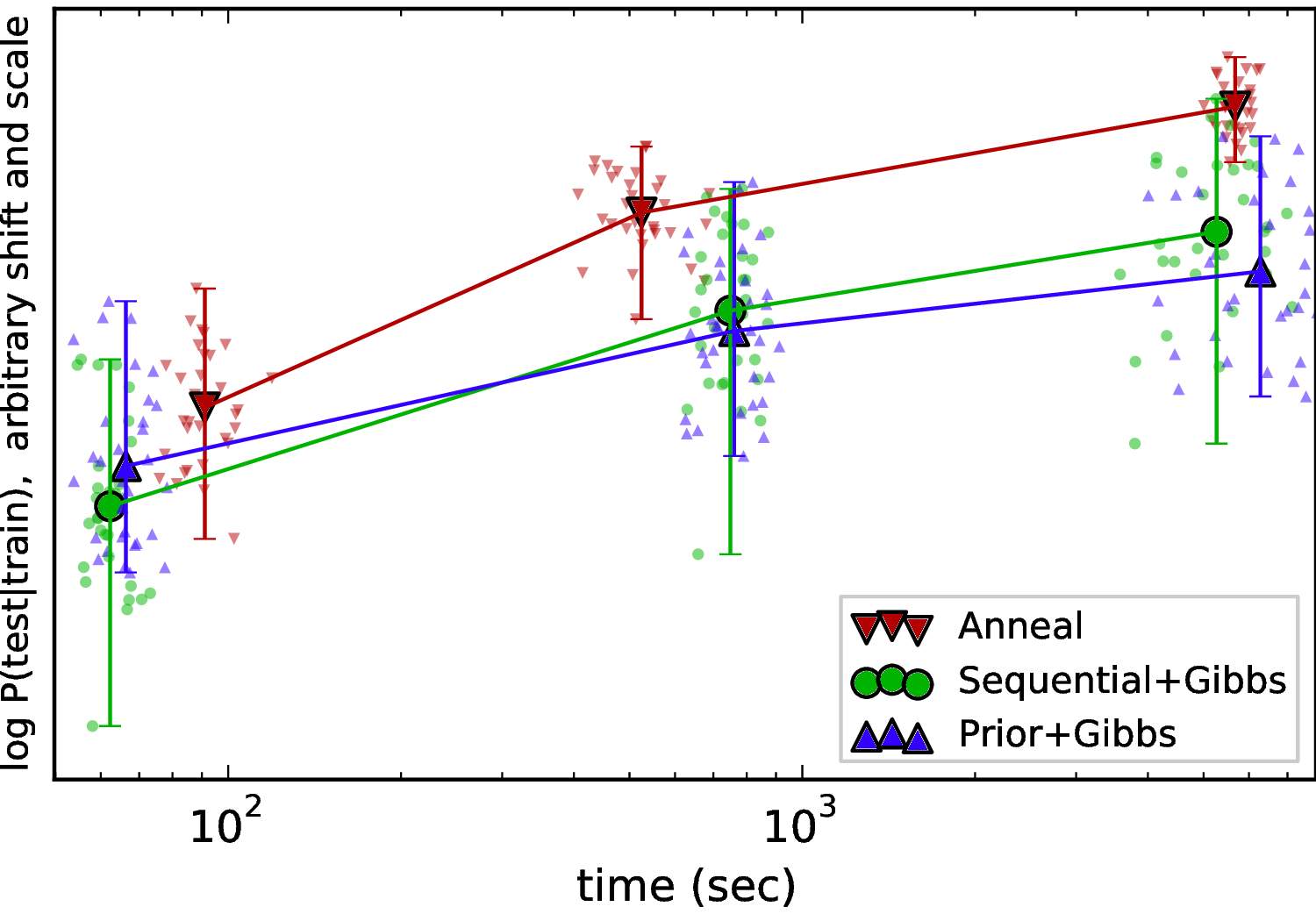} \\
\textsf{\small Census 1000000$\times$68} \\
\includegraphics[width=0.41\textwidth]{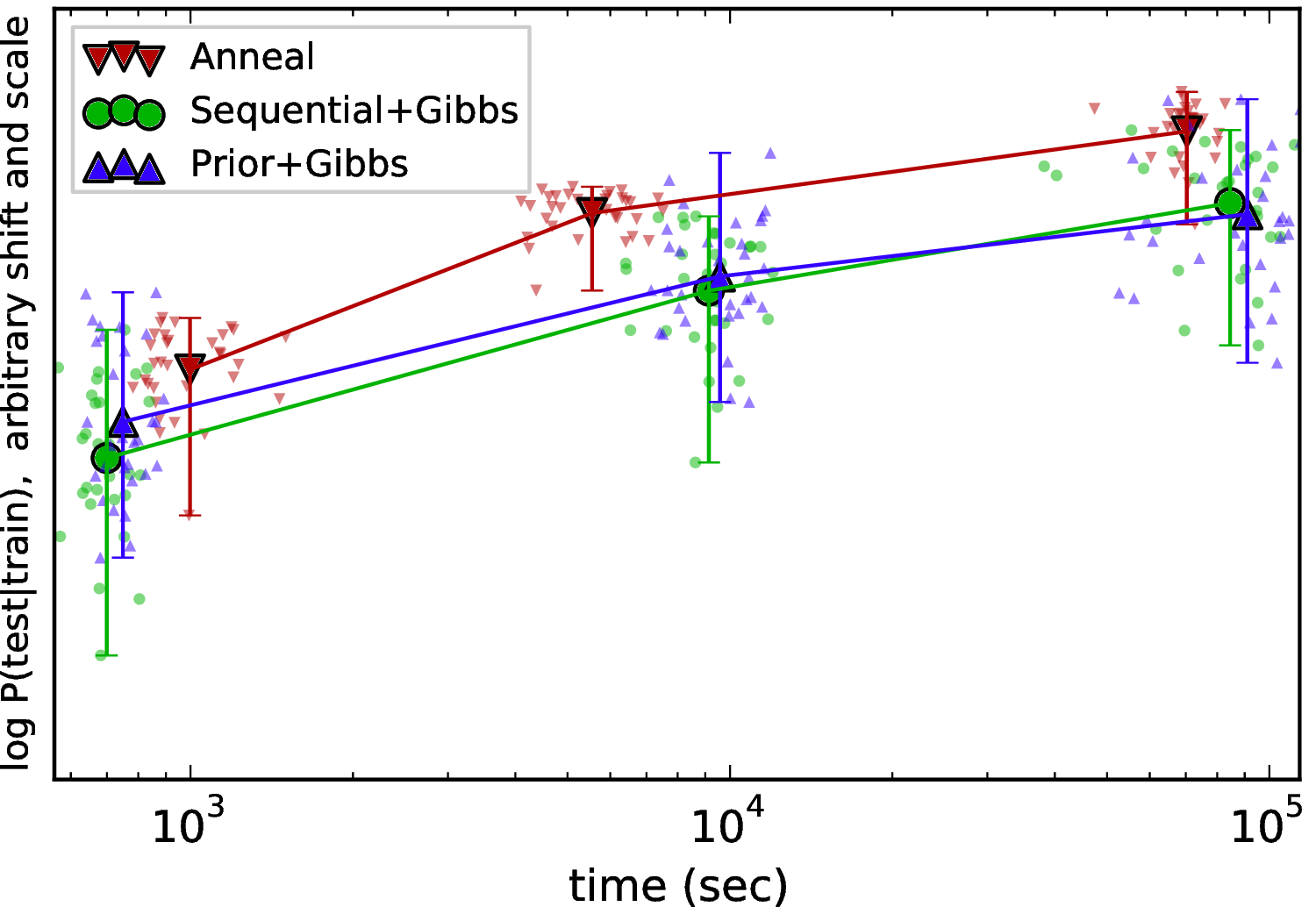}
\end{tabular}
\end{tabular}
\caption{
Crossvalidation scores of 3 inference strategies learning 7 datasets constrained to different bounds on wall clock time.
The Anneal strategy runs subsample annealing.
The Prior+Gibbs and Sequential+Gibbs strategies run full-data MCMC chains with prior and sequential initialization, resp.
}
\label{fig:scale}
\end{figure*}


Empirically, we find subsample annealing results in better crossvalidation score and much lower variance in crossvalidation score (i.e., with fewer outlying bad samples), in bounded wall clock time.
Figure~\ref{fig:scale} shows three algorithms being run for different time-bounds on all datasets.
We also find (not shown) that the feature partitions resulting from subsample annealing are much more consistent, despite having been learned at a strictly higher temperature than those learned with MCMC inference.

We had expected sequential initialization to outperform initialization from the prior, as in the above toy model, however empirically sequential initialization is no better and is less robust than initialization from the prior.
We believe that this is due to our deeper model, where at the time of sequential initialization, hyperparameter values have not been learned, and the blindly random hyperparameter values lead to a poor initialized state.
Subsample annealing addresses this by regularly performing hyperparameter inference while slowly adding data to the subsample.

\section{Related Work}
\label{sec:related}


We view subsample annealing as an addition to the class of MCMC methods \cite{welling2011bayesian, ahn2012bayesian, korattikara2013austerity, vandemeent_arxiv_2014} which allow for a tunable tradeoff between bias and inference speed advocated by Korattikara, Chen, and Welling \cite{korattikara2013austerity} in the context of stochastic gradient descent with minibatches.
As in our approach, they produce a single sample by running a schedule that transitions from fast-but-inaccurate to accurate-but-slow, much like simulated annealing.

Simulated annealing \cite{kirkpatrick1983optimization} is an optimization algorithm wherein a candidate solution follows Markov chain dynamics through a time-varying energy ($=-\log$ probability) landscape.
Whereas in simulated annealing the energy landscape is scaled by a time varying inverse-temperature factor $\beta(t)$, our subsample annealing algorithm scales the dataset by subsampling a portion $\beta(t)$ of data.
Whereas in optimization applications the temperature parameter is cooled down to zero to yield an (approximate) optimal/MAP solution, in Bayesian applications the temperature is instead cooled down to 1 to yield a posterior sample \cite{neal1993probabilistic}.
The analogy between subsample size and temperature has recently been employed in van de Meent, Paige, and Wood's \textit{subsample tempering} algorithms \cite{vandemeent_arxiv_2014}.

Goodman and Sokal \cite{goodman1989multigrid} develop a multiscale method called Multigrid Monte Carlo (MGMC), adapted from the multigrid method of solving PDEs.
Their MGMC approach avoids the slowdown of single-site Gibbs inference by performing inference on a hierarchy of representations at different levels of coarseness.
This approach is very similar to the ``horizontal compression'' phenomenon we saw in section~\ref{sec:analysis-local}.
Liu and Sabatti \cite{liu2000generalised} generalize the MGMC method to a wider range of models including ``nonparametric'' continuous time series.

Subsample annealing is related to other generalized annealing techniques, in particular the sequential buildup algorithm \cite{wong1995comment}, which is based on the observation that the annealed (or tempered) parameter need not correspond to a formal temperature, but may index any ``suitably overlapping'' collection of distributions.
A similar idea has been applied to the problem of generating proposal clusterings in a Metropolis-Hastings kernel for DPMMs \cite{dahl2005sequentially}.
Additionally, sequential Monte Carlo or particle filtering has been used in streaming inference of large datasets \cite{chopin2002sequential, ridgeway2002bayesian, del2006sequential}.

Finally our method is an addition to the growing number of subsample/minibatch approaches in both variational \cite{hoffman2012stochastic} and MCMC \cite{welling2011bayesian, ahn2012bayesian, korattikara2013austerity, vandemeent_arxiv_2014} inference.

\section{Discussion}


Practitioners have increasingly used ideas like subsample annealing to cope with large datasets, e.g., it is common to learn categories or hyperparameters from a subsampled dataset, then run simpler inference to categorize the full dataset.
We have provided a principled alternative to these heuristics, proving data-linear scaling in some models, and demonstrating improved inference on real datasets.

It is an open question when subsample annealing can offer speedup.
We have seen subsample annealing significantly improve inference quality in DPMM and Cross-Categorization models of some real datasets (as in Figure~\ref{fig:scale}), but on some datasets it provides no improvement.
We suspect this has to do with the shape of the energy landscape in different datasets.
In the language of section~\ref{sec:analysis-global}, subsampling offers the most improvement when the energy barriers $\delta$ are large, and the energy gaps $\gamma$ between between hypotheses are moderate --sufficiently small that many hypotheses are plausible%
, but sufficiently large relative to $\delta$ that when the chain is run hot enough to lower the barrier to a small $\beta\delta$, the gap $\beta\gamma$ is still observable.

Subsample annealing may also be sensitive to rare outliers that are missed early in the annealing schedule.
We suspect that the poor performance in the largest Network dataset above may be caused by rare ``network intrusion events''.
We hope to address outliers by using stratified subsampling, which favors diverse subsamples.

While the theoretical results of section~\ref{sec:analysis-local} indicate exponential speedup in a two-component mixture model, we suspect that nonparametric DPMMs allow faster mixing, so that subsample annealing may only provide a polynomial speedup.
However we suspect that subsample annealing will help mixing even more dramatically when learning structure, since structure learning intuitively has higher barriers between hypotheses.

\ifnum\statePaper=1{

\subsection{Acknowledgements}


We are grateful to the anonymous reviewers, Finale Doshi-Velez, and Beau Cronin for helpful comments.

}\fi

\bibliography{references}{}

\begin{thebibliography}{10}

\bibitem{ahn2012bayesian}
Sungjin Ahn, Anoop Korattikara, and Max Welling.
\newblock Bayesian posterior sampling via stochastic gradient fisher scoring.
\newblock {\em arXiv preprint arXiv:1206.6380}, 2012.

\bibitem{Bache+Lichman:2013}
K.~Bache and M.~Lichman.
\newblock {UCI} machine learning repository, 2013.
\newblock US Census Data (1990) Data Set.

\bibitem{Bottou08thetradeoffs}
Léon Bottou and Olivier Bousquet.
\newblock The tradeoffs of large scale learning.
\newblock In {\em IN: ADVANCES IN NEURAL INFORMATION PROCESSING SYSTEMS 20},
  pages 161--168, 2008.

\bibitem{chopin2002sequential}
Nicolas Chopin.
\newblock A sequential particle filter method for static models.
\newblock {\em Biometrika}, 89(3):539--552, 2002.

\bibitem{dahl2005sequentially}
David~B Dahl.
\newblock Sequentially-allocated merge-split sampler for conjugate and
  nonconjugate dirichlet process mixture models.
\newblock {\em Journal of Computational and Graphical Statistics}, 11, 2005.

\bibitem{del2006sequential}
Pierre Del~Moral, Arnaud Doucet, and Ajay Jasra.
\newblock Sequential monte carlo samplers.
\newblock {\em Journal of the Royal Statistical Society: Series B (Statistical
  Methodology)}, 68(3):411--436, 2006.

\bibitem{diaconis2008Gibbs}
Persi Diaconis, Kshitij Khare, and Laurent Saloff-Coste.
\newblock Gibbs sampling, exponential families and orthogonal polynomials.
\newblock {\em Statistical Science}, 23(2):151--178, 2008.

\bibitem{fink1997compendium}
Daniel Fink.
\newblock A compendium of conjugate priors.
\newblock Technical report, Montana State University, May 1997.

\bibitem{gardiner2009stochastic}
Crispin~W Gardiner.
\newblock {\em Stochastic methods}.
\newblock Springer Berlin, 2009.

\bibitem{ghahramani2013bayesian}
Zoubin Ghahramani.
\newblock Bayesian non-parametrics and the probabilistic approach to modelling.
\newblock {\em Philosophical Transactions of the Royal Society A: Mathematical,
  Physical and Engineering Sciences}, 371(1984), 2013.

\bibitem{goodman1989multigrid}
Jonathan Goodman and Alan~D Sokal.
\newblock Multigrid monte carlo method. conceptual foundations.
\newblock {\em Physical Review D}, 40(6):2035, 1989.

\bibitem{hettich99kddcup}
S~Hettich and S.~D. Bay.
\newblock {UCI} {KDD} archive, 1999.
\newblock KDD Cup 1999 dataset.

\bibitem{hoffman2012stochastic}
Matt Hoffman, David~M Blei, Chong Wang, and John Paisley.
\newblock Stochastic variational inference.
\newblock {\em arXiv preprint arXiv:1206.7051}, 2012.

\bibitem{kirkpatrick1983optimization}
Scott Kirkpatrick, D.~Gelatt Jr., and Mario~P Vecchi.
\newblock Optimization by simmulated annealing.
\newblock {\em science}, 220(4598):671--680, 1983.

\bibitem{korattikara2013austerity}
Anoop Korattikara, Yutian Chen, and Max Welling.
\newblock Austerity in mcmc land: Cutting the metropolis-hastings budget.
\newblock {\em arXiv preprint arXiv:1304.5299}, 2013.

\bibitem{liu2000generalised}
Jun~S Liu and Chiara Sabatti.
\newblock Generalised gibbs sampler and multigrid monte carlo for bayesian
  computation.
\newblock {\em Biometrika}, 87(2):353--369, 2000.

\bibitem{mansinghka2009cross}
Vikash~K Mansinghka, Eric Jonas, Cap Petschulat, Beau Cronin, Patrick Shafto,
  and Joshua~B Tenenbaum.
\newblock Cross-categorization: A method for discovering multiple overlapping
  clusterings.
\newblock In {\em Proc. of Nonparametric Bayes Workshop at NIPS}, volume 2009,
  2009.

\bibitem{neal1993probabilistic}
Radford~M Neal.
\newblock Probabilistic inference using markov chain monte carlo methods.
\newblock 1993.

\bibitem{neal2000markov}
Radford~M Neal.
\newblock Markov chain sampling methods for dirichlet process mixture models.
\newblock {\em Journal of computational and graphical statistics},
  9(2):249--265, 2000.

\bibitem{pitman1997two}
Jim Pitman and Marc Yor.
\newblock The two-parameter poisson-dirichlet distribution derived from a
  stable subordinator.
\newblock {\em The Annals of Probability}, 25(2):855--900, 1997.

\bibitem{ridgeway2002bayesian}
Greg Ridgeway and David Madigan.
\newblock Bayesian analysis of massive datasets via particle filters.
\newblock In {\em Proceedings of the eighth ACM SIGKDD international conference
  on Knowledge discovery and data mining}, pages 5--13. ACM, 2002.

\bibitem{teh2006hierarchical}
Yee~Whye Teh, Michael~I Jordan, Matthew~J Beal, and David~M Blei.
\newblock Hierarchical dirichlet processes.
\newblock {\em Journal of the american statistical association}, 101(476),
  2006.

\bibitem{vandemeent_arxiv_2014}
Jan-Willem van~de Meent, Brooks Paige, and Frank Wood.
\newblock {Tempering by Subsampling}.
\newblock {\em ArXiv e-prints}, 2014.

\bibitem{welling2011bayesian}
Max Welling and Yee~W Teh.
\newblock Bayesian learning via stochastic gradient langevin dynamics.
\newblock In {\em Proceedings of the 28th International Conference on Machine
  Learning (ICML-11)}, pages 681--688, 2011.

\bibitem{wennberg2008tracking}
John~E Wennberg, Elliott~S Fisher, David~C Goodman, and Jonathan~S Skinner.
\newblock Tracking the care of patients with severe chronic illness-the
  dartmouth atlas of health care 2008.
\newblock 2008.

\bibitem{wong1995comment}
Wing~Hung Wong.
\newblock Comment on bayesian computation and stochastic system by besag et.
  al.
\newblock {\em Statistical Science}, 10:52--53, 1995.

\end{thebibliography}
\bibliographystyle{plain}

\appendix
\clearpage
\newpage

\section{Proofs of clustering speedup}
\label{appendix:local}

\begin{proof}[Proof of Lemma~\ref{thm:continuum}]
Given portions $x$ and $y$ of red and blue balls, resp., in the left urn, consider the $2\times 2\times 2$ possible Gibbs moves: remove red/blue from left/right urn and replace in left/right urn.
For large data size $N$, data size changes very little after a single removal, so the add and remove steps decouple into differentials $dx_{\text{rem}}$, $dy_{\text{rem}}$, $dx_{\text{add}}$, and $dy_{\text{add}}$.
Compute the probabilities of each move; then compute the mean and variance in $x$ and, by red-blue symmetry, $y$:
\begin{align*}
&\mathbb E[dx_{\text{rem}}] = \frac 1 N \frac {1-2 x} 2
\\
&\mathbb V[dx_{\text{rem}}] = \frac 1 {N^2} \frac {x(1-x)} r
\\
&\mathbb E[dx_{\text{add}}] =
  \frac 1 {2N} \frac {n_2 r_1 - n_1 r_2} {n_2 r_1 + n_1 r_2}
  + \frac \alpha {N^2} \frac {2 n_1} {x r + (1-x) l}
\\
&\mathbb V[dx_{\text{add}}] = \frac 1 {N^2}
    \frac {r \; n_2 r_1 \; n_1 r_2} {(n_2 r_1 + n_1 r_2)^2}
\\&\quad+ \frac \alpha {N^3}
    \left[
      \frac {n_1 n_2} {(x n_1 + (1-x) n_2)^2} + \frac 2 {x n_1 + (1-x) n_2}
    \right]
\end{align*}
with lower-case intrinsic quantities defined as
\begin{align*}
  r_1 &= (\text{\#red on left}) / N &
  r_2 &= (\text{\#red on right}) / N \\
  b_1 &= (\text{\#blue on left}) / N &
  b_2 &= (\text{\#blue on right}) / N \\
  n_1 &= r_1 + b_1 &
  n_2 &= r_2 + b_2
\end{align*}
These moments comprise the $N$-scaled Fokker-Planck coefficients
\begin{align*}
  f &= N \begin{bmatrix}
         \mathbb E[dx_{\text{rem}} + dx_{\text{add}}] \\
         \mathbb E[dy_{\text{rem}} + dy_{\text{add}}]
       \end{bmatrix}
\\
  D &= N^2 \begin{bmatrix}
         \mathbb V[dx_{\text{rem}} + dx_{\text{add}}] & 0 \\
         0 & \mathbb V[dy_{\text{rem}} + dy_{\text{add}}]
       \end{bmatrix}
\end{align*}
By inspection these depend only on intrinsic quantities and the scaled hyperparameter $\frac \alpha N$.
\end{proof}

\begin{proof}[Proof of Theorem~\ref{thm:local}]
At fixed error bound $\epsilon$, the continuous dynamics is within $\epsilon$ of true dynamics by data size, say, $N_\epsilon$.
Thus at large data sizes, the MCMC dynamics is linear and mixing time is $T_{\text{cold}}=O\left(N^2\log(\epsilon)\right)$.
In a subsample annealing schedule $\beta(t)=t/T$, the subsample annealing dynamics at subsamples larger than $N_\epsilon$ is approximately time-scaled versions of the dynamics at full size $N$, so the effective schedule length is
$$
T_{\text{eff}}
\;=\; \int_{\frac {N_\epsilon} N T}^T \frac {dt} {\beta(t)^2}
\;=\; \frac 1 N\left[ \frac 1 {N_\epsilon} - \frac 1 N \right]
\;\ge\; \frac 2 {N N_\epsilon}
$$
Since effective time is inverse in data size, annealing mixes in time $T_{\text{anneal}}=O\left(N\log(\epsilon)\right)$.
\end{proof}

\section{Proofs of bimodal speedup}
\label{appendix:global}

\begin{proof}[Proof of Lemma~\ref{thm:dynamics}]
Consider a two-state system $\mathbf x = [x, 1-x]^T$ at energy levels $[N\gamma, 0]$.
The steady-state solution at temperature $\beta$ should be $\pi_beta := [\sigma(\beta\gamma N), \sigma(\beta\gamma N)]$, where $\sigma(t) = \frac 1 {1 + \exp(-t)}$ is the logistic sigmoid function.
In continuous time mixing, we think of the state briefly jumping on to an energy barrier of height $\beta\delta N$ then jumping back down according to $\pi_\beta$.
If the rate of jumping up to energy $\beta\delta N$ is $\exp(-\beta\delta N)$, then the dynamics is:
\[
  \frac {d\mathbf x} {dt}
  = \exp(-\beta\delta N)
    \left[
      \begin{bmatrix}
        \sigma(\beta\gamma N) & \sigma(\beta\gamma N) \\
        \sigma(-\beta\gamma N) & \sigma(-\beta\gamma N)
      \end{bmatrix}
      - \mathbf I
    \right]
    \mathbf x
\]
The first coordinate $x$ determines the state; expanding yields Equation~\ref{eqn:dynamics}.
\end{proof}

\begin{proof}[Proof of Theorem~\ref{thm:global}]
In this binary system the TVD of state $x$ from truth is $|x-x_{\text{true}}|=|x-\sigma(\gamma N)|$.
Now we seek asymptotic lower bounds on $T$ guaranteeing TVD$<\epsilon$.
To prove (a) observe that in cold inference ($\beta=1$), the system is linear homogeneous with eigenvalue $\exp(-N\delta)$.
To prove (b) we transform from time coordinates $t$ to ``natural'' coordinates
$$
\tau = \exp
  \left(
    \frac T {N\delta}
    \left[
      \exp\left( -\frac {N\delta t} T \right) -\exp\left( -N\delta \right)
    \right]
  \right),
$$
where,
assuming worst-case initial condition $x(0) = 0$,
the final state $x$ is a uniform integral
$$
  x = \int_0^1 \sigma(\beta(\tau) N\gamma)\,d\tau
$$
involving the transformed annealing schedule
$$
  \beta(\tau) = \frac {-1} {N\delta} \log
    \left(
    \exp(-N\delta) - \frac {N\delta} T \log(\tau)
    \right).
$$
Using the inequality $\sigma(\gamma)-\sigma(\beta\gamma)\le\exp(-\beta\gamma)$,
we can bound error by
\begin{equation}
\label{eqn:tvd-bound}
  \text{TVD} < \int_0^1 \exp(-\beta(\tau) N\gamma)\,d\tau
\end{equation}
Since the integrand $\exp(-\beta(\tau)N\gamma)$ is bounded in $(0,1)$, and $\beta(\tau)$ is increasing, Equation~\ref{eqn:tvd-bound} holds if $T$ is chosen large enough that $\exp(-\beta(\epsilon/2)>1$, for example if
$$
  T > \frac {N\delta \log\left(\frac 2 \epsilon\right)}
            {\left(\frac \epsilon 2\right)^{\frac \delta \gamma} - \exp(-N\delta)}
$$
or more conservatively, for any $K>1$, and sufficiently large $N$, 
$$
  T > KN\delta
      \log\left(\frac 2 \epsilon\right)
      \left(\frac 2 \epsilon \right)^{\frac \delta \gamma},
$$
whence the asymptotic bound.
\end{proof}

\end{document}